\theoremstyle{definition}
\newtheorem{definition}{Definition}
\newtheorem{theorem}{Theorem}
\newcommand{\norm}[1]{\left\lVert#1\right\rVert}
  \providecommand\BibTeX{{%
    \normalfont B\kern-0.5em{\scshape i\kern-0.25em b}\kern-0.8em\TeX}}}
\begin{document}

\title{Privacy Preserving Gaze Estimation using Synthetic Images via a Randomized Encoding Based Framework}

\author{Efe Bozkir}
\authornote{Both authors contributed equally to this research.}
\email{efe.bozkir@uni-tuebingen.de}
\affiliation{%
  \department{Human-Computer Interaction}
  \institution{University of T{\"u}bingen, Germany}
}

\author{Ali Burak {\"U}nal}
\authornotemark[1]
\email{uenal@informatik.uni-tuebingen.de}
\affiliation{%
  \department{Methods in Medical Informatics}
  \institution{University of T{\"u}bingen, Germany}
}

\author{Mete Akg{\"u}n}
\email{mete.akguen@uni-tuebingen.de}
\affiliation{%
  \department{Methods in Medical Informatics, Translational Bioinformatics}
  \institution{University of T{\"u}bingen, Germany}
}

\author{Enkelejda Kasneci}
\email{enkelejda.kasneci@uni-tuebingen.de}
\affiliation{%
  \department{Human-Computer Interaction}
  \institution{University of T{\"u}bingen, Germany}
}

\author{Nico Pfeifer}
\authornote{Also affiliated with Statistical Learning in Computational Biology, Max Planck Institute for Informatics, Saarbr\"ucken, Germany}
\email{pfeifer@informatik.uni-tuebingen.de}
\affiliation{%
  \department{Methods in Medical Informatics}
  \institution{University of T{\"u}bingen, Germany}
}

\renewcommand{\shortauthors}{Bozkir and {\"U}nal, et al.}

\begin{abstract}
Eye tracking is handled as one of the key technologies for applications that assess and evaluate human attention, behavior, and biometrics, especially using gaze, pupillary, and blink behaviors. One of the challenges with regard to the social acceptance of eye tracking technology is however the preserving of sensitive and personal information. To tackle this challenge, we employ a privacy-preserving framework based on randomized encoding to train a Support Vector Regression model using synthetic eye images privately to estimate the human gaze. During the computation, none of the parties learn about the data or the result that any other party has. Furthermore, the party that trains the model cannot reconstruct pupil, blinks or visual scanpath. The experimental results show that our privacy-preserving framework is capable of working in real-time, with the same accuracy as compared to non-private version and could be extended to other eye tracking related problems.
\end{abstract}

\begin{CCSXML}
<ccs2012>
 <concept>
    <concept_id>10002978</concept_id>
    <concept_desc>Security and privacy</concept_desc>
    <concept_significance>500</concept_significance>
 </concept>
 <concept>
    <concept_id>10003120</concept_id>
    <concept_desc>Human-centered computing</concept_desc>
    <concept_significance>300</concept_significance>
 </concept>
 <concept>
    <concept_id>10010147.10010257.10010293.10010075</concept_id>
    <concept_desc>Computing methodologies~Kernel methods</concept_desc>
    <concept_significance>300</concept_significance>
 </concept>
</ccs2012>
\end{CCSXML}

\ccsdesc[500]{Security and privacy}
\ccsdesc[300]{Computing methodologies~Kernel methods}
\ccsdesc[300]{Human-centered computing}

\keywords{privacy preserving machine learning, gaze estimation, randomized encoding, eye tracking, human computer interaction}

\maketitle

\section{Introduction}
Recent advances in the fields of Head-Mounted-Display (HMD) technology, computer graphics, augmented reality (AR), and eye tracking enabled numerous novel applications. One of the most natural and non-intrusive ways of interaction with HMDs or smart glasses is achieved by gaze-aware interfaces using eye tracking. However, it is possible to derive a lot of sensitive and personal information from eye tracking data such as intentions, behaviors, or fatigue since eyes are not fully controlled in a conscious way.

It has been shown that cognitive load \cite{Chen2014UsingTP,Appel2018}, visual attention \cite{Bozkir:2019:ADA:3343036.3343138}, stress \cite{kubler2014stress}, task identification \cite{Borji2014}, skill level assessment and expertise \cite{Liu2009,eivazi2017optimal,castner2018scanpath}, human activities \cite{Steil:2015:DEH:2750858.2807520,braunagel2017online}, biometric information and authentication \cite{Kinnunen:2010:TTP:1743666.1743712,Komogortsev2010,6712725,Zhang:2018:CAU:3178157.3161410,Abdrabou:2019:JGW:3314111.3319837}, or personality traits \cite{Berkovsky:2019:DPT:3290605.3300451} can be obtained using eye tracking data. Since highly sensitive information can be derived from eye tracking data, it is not surprising that HMDs or smart glasses have not been adopted by large communities yet. According to a recent survey \cite{Steil2019}, people agree to share their eye tracking data only when it is co-owned by a governmental health-agency or is used for research purposes. This indicates that people are hesitant about sharing their eye tracking data in commercial applications. Therefore, there is a likelihood that larger communities could adopt HMDs or smart glasses if privacy-preserving techniques are applied in the eye tracking applications. The reasons why privacy preserving schemes are needed for eye tracking are discussed in \cite{Liebling2014} extensively. However, until now, there are not many studies in privacy-preserving eye tracking. Recently, a method to detect privacy sensitive everyday situations \cite{Steil:2019:PPH:3314111.3319913}, an approach to degrade iris authentication while keeping the gaze tracking utility in an acceptable accuracy \cite{John:2019:EDI:3314111.3319816}, and differential privacy based techniques to protect personal information on heatmaps and eye movements \cite{Liu2019,Steil2019} are introduced. While differential privacy can be applied to eye tracking data for various tasks, it introduces additional noise on the data which causes decrease in the utility \cite{Liu2019,Steil2019}, and it might lead to less accurate results in computer vision tasks, such as gaze estimation or activity recognition.

In light of the above, function-specific privacy models are required. In this work, we focus on the gaze estimation problem as a proof-of-concept by using synthetic data including eye landmarks and ground truth gaze vectors. However, the same privacy-preserving approach can be extended to any feature-based, eye tracking problem such as intention, fatigue, or activity detection, in HMD or unconstrained setups due to the demonstrated real-time working capabilities. In our study, the gaze estimation task is solved by using Support Vector Regression (SVR) models in a privacy-preserving manner by computing the dot product of eye landmark vectors to obtain the kernel matrix of the SVR for a scenario, where two parties have the eye landmark data, each of which we call {\it input-party}, and one {\it function-party} that trains a prediction model on the data of the input-parties. This scenario is relevant when the input-parties use eye tracking data to improve the accuracy of their models and do not share the data due to the privacy concerns. To this end, we utilize a framework employing randomized encoding \cite{unal2019framework}. In the computation, neither the eye images nor the extracted features are revealed to the function-party directly. Furthermore, the input-parties do not infer the raw eye tracking data or result of the computation. Eye images that are used for training and testing are rendered using UnityEyes \cite{wood2016_etra} synthetically and 36 landmark-based features \cite{Park2018} are used. To the best of our knowledge, this is the first work that applies a privacy-preserving scheme based on function-specific privacy models on an eye tracking problem.

\section{Threat Model}
We assume that the input-parties are semi-honest (honest but curious) that are not allowed to deviate from the protocol description while they try to infer some valuable information about other parties' private inputs using their views of the protocol execution. We also assume that the function-party is malicious and the input-parties and the function-party do not collude. 

\section{Methodology}
In this section, we discuss the data generation, randomized encoding, and privacy-preserving gaze estimation framework.

\subsection{Data Generation}
To train and evaluate the gaze estimator, we generate eye images and gaze vectors. As our work is a proof-of-concept and requires high amount of data, synthetic images from UnityEyes \cite{wood2016_etra}, which is based on the Unity3D, are used. \textit{Camera parameters} and \textit{Eye parameters} are chosen as $(0,0,0,0)$ (fixed camera) and $(0,0,30,30)$ (eyeball pose range parameters in degrees), respectively. $20,000$ images are rendered in \textit{Fantastic} quality setting and $512 \times 384$ screen resolution. Then, processing and normalization pipeline from \cite{Park2018} is employed. In the end, we obtain $128 \times 96$ sized eye images, 18 eye landmarks including eight iris edge, eight eyelid, one iris center, and one iris-center-eyeball-center vector normalized according to Euclidean distance between eye corners, and gaze vectors using pitch and yaw angles. Final feature vectors consist of $36$ elements. Figure \ref{fig:eyeImages} shows an example illustration.

\begin{figure}
  \centering
   \subfloat[Landmarks.]{{\includegraphics[scale=0.3246]{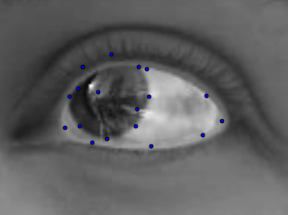}}}
   \qquad
   \qquad
   \subfloat[Gaze.]{{\includegraphics[scale=0.3246]{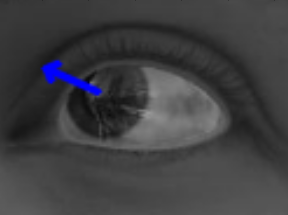} }}%
  \caption{Eye landmarks and gaze on a synthetic image.}
  \label{fig:eyeImages}
\end{figure}

\subsection{Randomized Encoding}
The utilized framework employs randomized encoding (RE) \cite{applebaum2006cryptography,applebaum2006computationally} to compute the  dot product of the landmark vectors. The dot product is needed to compute kernel matrix of the SVR which is later used for training the gaze estimator and validation of the framework.

In the randomized encoding, the computation of a function $f(x)$ is performed by a randomized function $\hat{f}(x;r)$ where $x$ is the input value, which corresponds to eye landmarks in our setup, and $r$ is the random value. The idea is to encode the original function by using random value(s) such that the combination of the components of the encoding reveals only the output of the original function. In the framework, the computation of the dot product is accomplished by utilizing the decomposable and affine randomized encoding (DARE) of addition and multiplication \cite{applebaum2017garbled}. The encoding of multiplication is as follows.

\begin{definition}[Perfect RE for Multiplication \cite{applebaum2017garbled}]
\label{def:remul}
A multiplication function is defined as $f_m(x_1,x_2)=x_1 \cdot x_2$ over a ring $\mathsf{R}$. One can perfectly encode the $f_m$ by employing the DARE $\hat{f_m}(x_1,x_2;r_1,r_2,r_3)$:
\begin{equation*}
\begin{aligned}
    \hat{f_m}(x_1,x_2;r_1,r_2,r_3) = ( & x_1+r_1, x_2+r_2, \\
    & r_2x_1+r_3, r_1x_2+r_1r_2-r_3),
\end{aligned}
\end{equation*}
where $r_1,r_2$ and $r_3$ are uniformly chosen random values. The recovery of $f_m(x_1,x_2)$ can be accomplished by computing $c_1 \cdot c_2 - c_3 - c_4$ where $c_1=x_1+r_1$, $c_2=x_2+r_2$, $c_3=r_2x_1+r_3$ and $c_4=r_1x_2+r_1r_2-r_3$. The simulation of $\hat{f_m}$ can be done perfectly by the simulator $\mathsf{Sim}(y;a_1,a_2,a_3) := (a_1,a_2,a_3,a_1a_2-y-a_3)$ where $a_1, a_2$ $a_3$ are random values.
\end{definition}

\subsection{Framework}
To perform the private gaze estimation task in our scenario, we inspire from the framework as in \cite{unal2019framework} due to its efficiency compared to other approaches in the literature. The framework is proposed to compute the addition or multiplication of the input values of two input-parties in the function-party by utilizing randomized encoding. We utilize the multiplication operation over the eye landmark vectors to compute the dot product of these vectors to obtain kernel matrix of the SVR in a privacy-preserving way.

We have two input-parties as Alice and Bob, having the eye landmark data as $X \in \mathbb{R}^{n_f \times n_a}$ and $Y \in \mathbb{R}^{n_f \times n_b}$ where $n_a$ and $n_b$ represent the number of samples in Alice and Bob, respectively, and $n_f$ is the number of features. In addition to the input-parties, there exists a server that trains a model on the data of the input-parties. $A_{.j}$ for any matrix $A$ represents the $j$-th column of the corresponding matrix and ''$\odot$`` represents the element-wise multiplication of the vectors. As a first step, Alice creates a uniformly chosen random value $r_3 \in \mathbb{R}$ and two vectors $r_1, r_2 \in \mathbb{R}^{n_f}$ with uniformly chosen random values, which are used to encode the element-wise multiplication of the vectors and shares them with Bob. Afterwards, Bob computes $C^2_{.j}=Y_{.j}+r_2$ and $C^4_j= \sum_{d=1}^{n_f} (r_1 \odot Y_{.j} + r_1 \odot r_2)_d - r_3$,  $\forall j \in \{1,\cdots,n_b\}$ where $C^2 \in \mathbb{R}^{n_f \times n_b}$ and $C^4 \in \mathbb{R}^{n_b}$. Meanwhile, Alice computes $C^1_{.i}=X_{.i}+r_1$ and $C^3_{i}=\sum_{d=1}^{n_f}(r_2 \odot X_{.i})_d + r_3$, $\forall i \in \{1,\cdots,n_a\}$ where $C^1 \in \mathbb{R}^{n_f \times n_a}$ and $C^3 \in \mathbb{R}^{n_a}$. Input-parties send their share of the encoding to the server with the gram matrix of their samples, which is the dot product among their samples. Then, the server computes the dot product between samples of Alice and Bob to complete the missing part of the gram matrix of all samples. To achieve this, the server computes $k_{ij} = \sum_{d=1}^{n_f}(C^1_{.i} \odot C^2_{.j})_d - C^3_{i} - C^4_{j}$, $\forall i \in \{1,\cdots,n_a\}$ and $\forall j \in \{1,\cdots,n_b\}$ where $k_{ij}$ is the $i$-th row $j$-th column entry of the gram matrix between the samples of the input-parties. Once the server has all components of the gram matrix, it constructs the complete gram matrix $K$ by simply concatenating the parts of it.  In our solution, Alice and Bob send to the server $(C^1, C^3)$ and $(C^2, C^4)$ tuples, respectively. These components reveal nothing but only the gram matrix of the samples after decoding. Furthermore, the input-parties shuffle their raw data before the computation to avoid the possibility of private information leakage such as the behavior of the person due to the nature of the visual sequence information. The overall flow is summarized in Figure \ref{fig:FlowOfArchitecture}.

\begin{figure}
  \centering
   \includegraphics[scale=0.58]{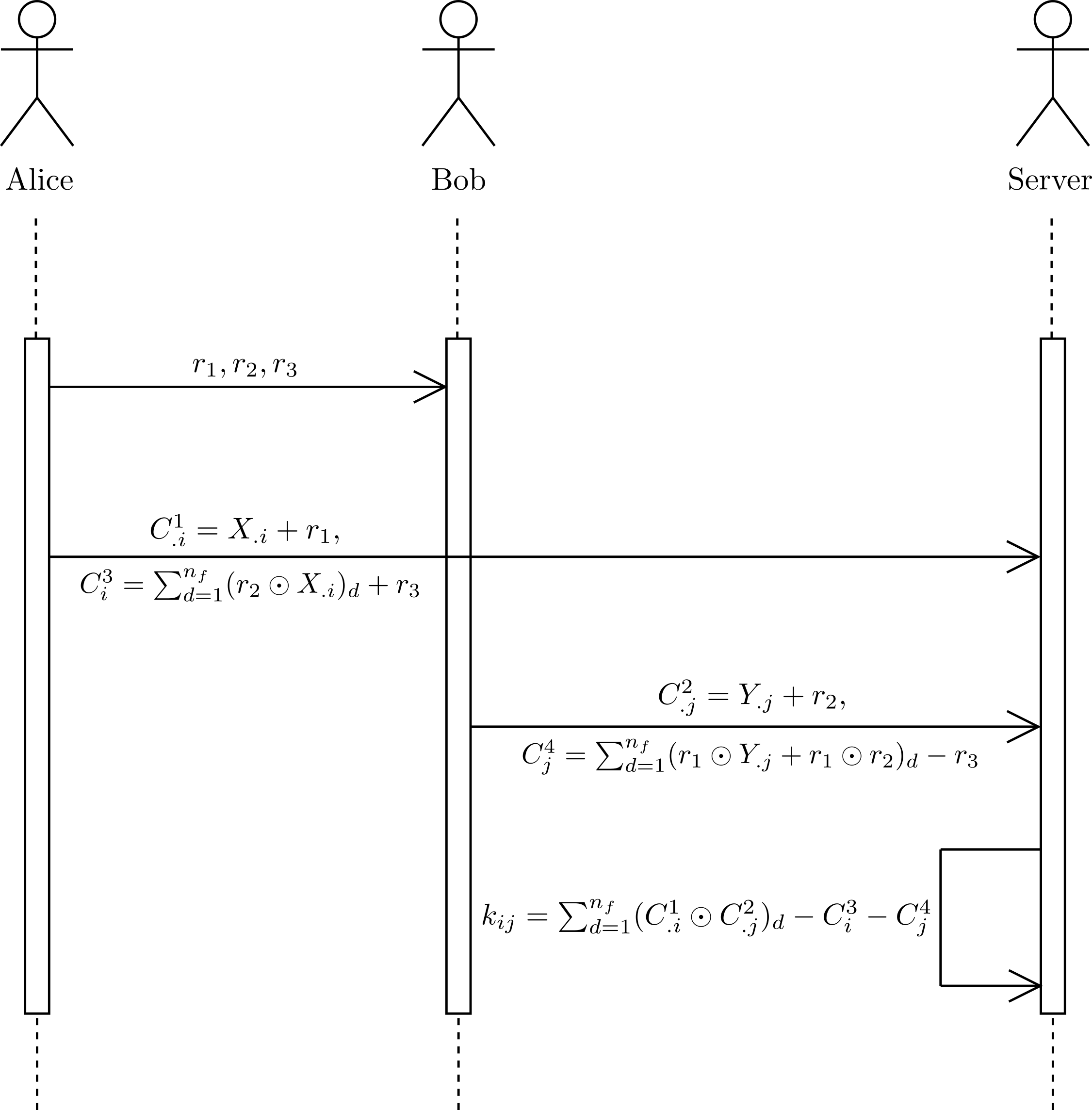}%
  \caption{Overall protocol execution.}
  \label{fig:FlowOfArchitecture}
\end{figure}

After having the complete gram matrix for all samples that Alice and Bob have, the server uses it as a kernel matrix as if it was computed by the linear kernel function on pooled data. Additionally, it is also possible to compute a kernel matrix as if it was computed by the polynomial or radial basis kernel function (RBF) by utilizing the resulting gram matrix. As an example, the calculation of RBF from the gram matrix is as follows.

\begin{equation*} \label{rbfdotproduct}
K(x,y) = \exp\Bigg(-\dfrac{\norm{x \cdot x - 2 x \cdot y + y \cdot y}^2}{2 \sigma^2}\Bigg),
\end{equation*}

where ``$\cdot$'' represents the dot product of vectors, which is possible to obtain from the gram matrix, and $\sigma$ is the parameter utilized to adjust the similarity level. Once the desired kernel matrix is computed, it is possible to train an SVR model by employing the computed kernel matrix to estimate the gaze. In the process of the computation of the dot product, the amount of data transferred among parties is $(n_fn_a + n_fn_b + n_a + n_b + 2n_f) \times d$ bytes where $d$ is the size of one data unit.

\begin{figure*}[h]
    \centering
    \subfloat[Execution time of Alice.]{{\includegraphics[width=0.2\linewidth]{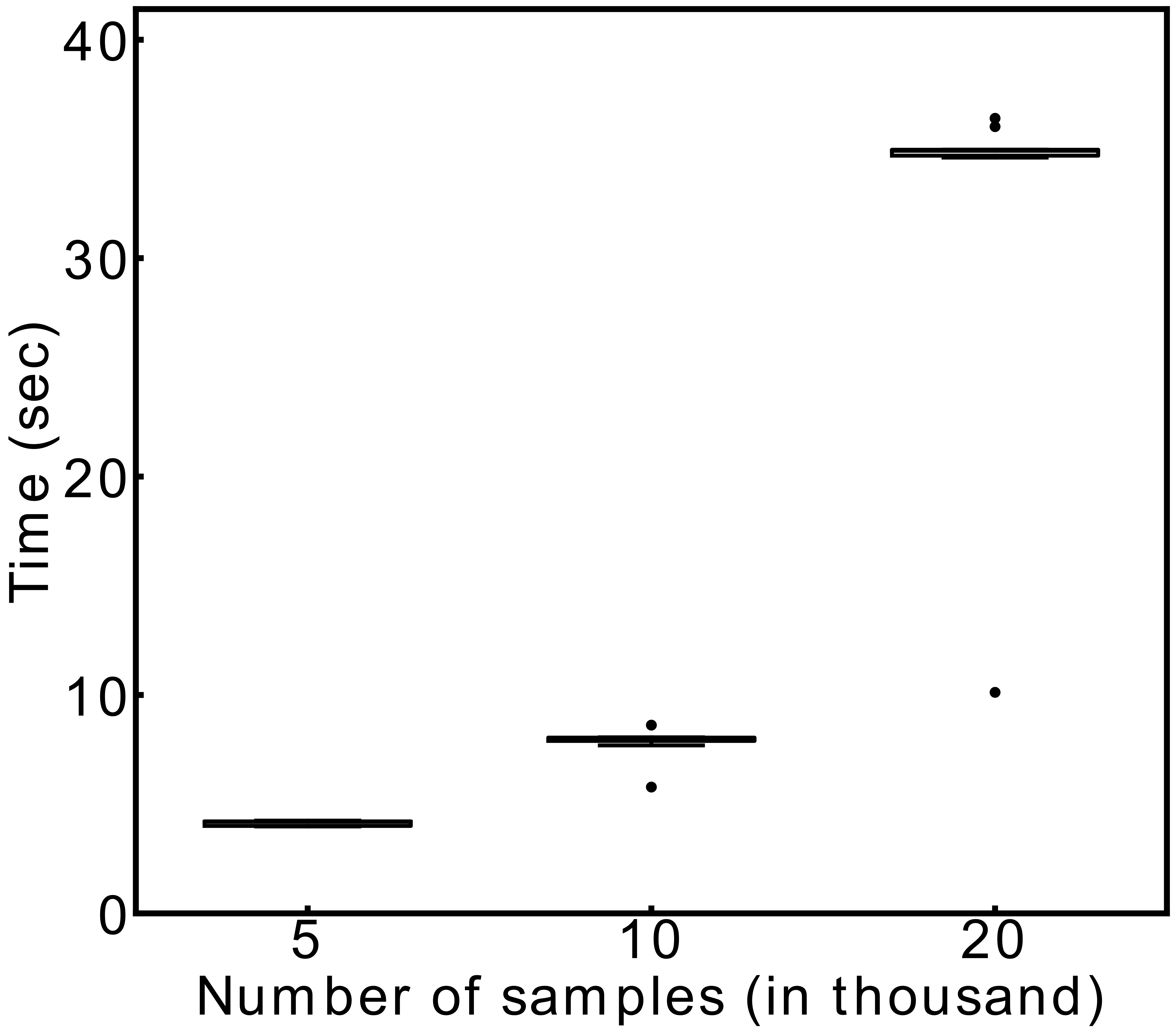}}}
    \hspace{15pt}
    \subfloat[Execution time of Bob.]{{\includegraphics[width=0.2\linewidth]{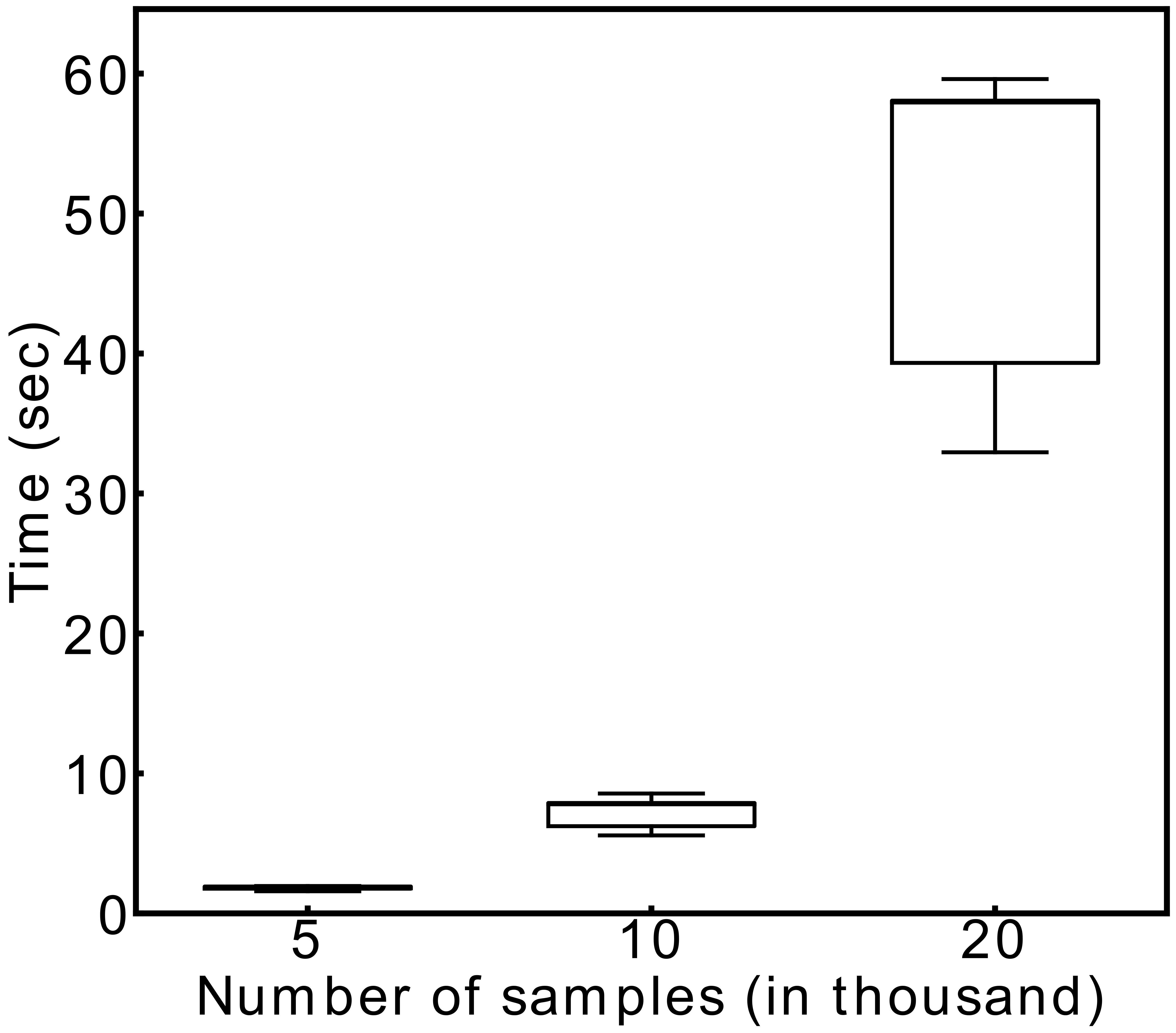}}}
    \hspace{15pt}
    \subfloat[Execution time of Server.]{{\includegraphics[width=0.2\linewidth]{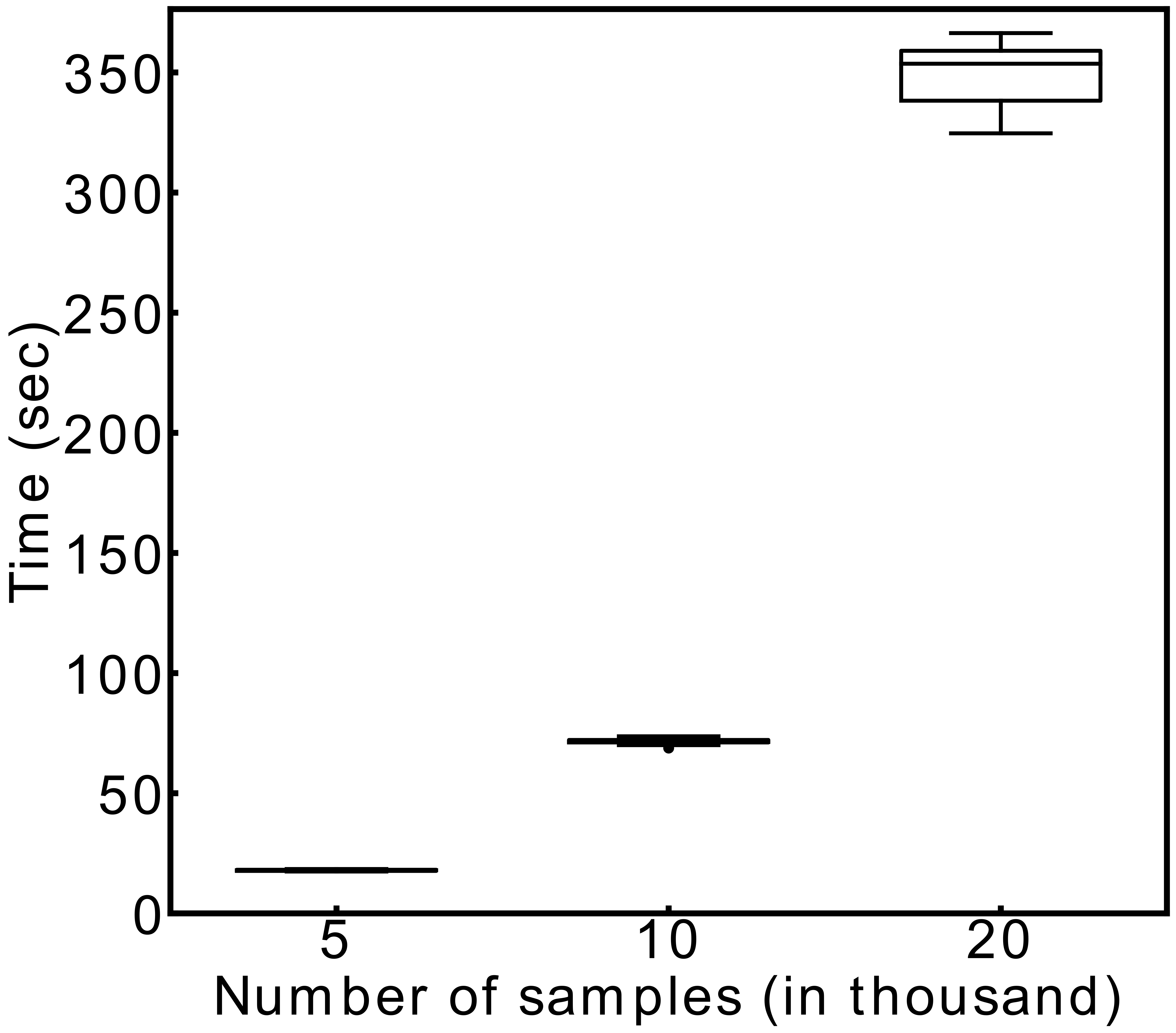}}}
    \hspace{15pt}
    \subfloat[Prediction time of Server.]{{\includegraphics[width=0.2\linewidth]{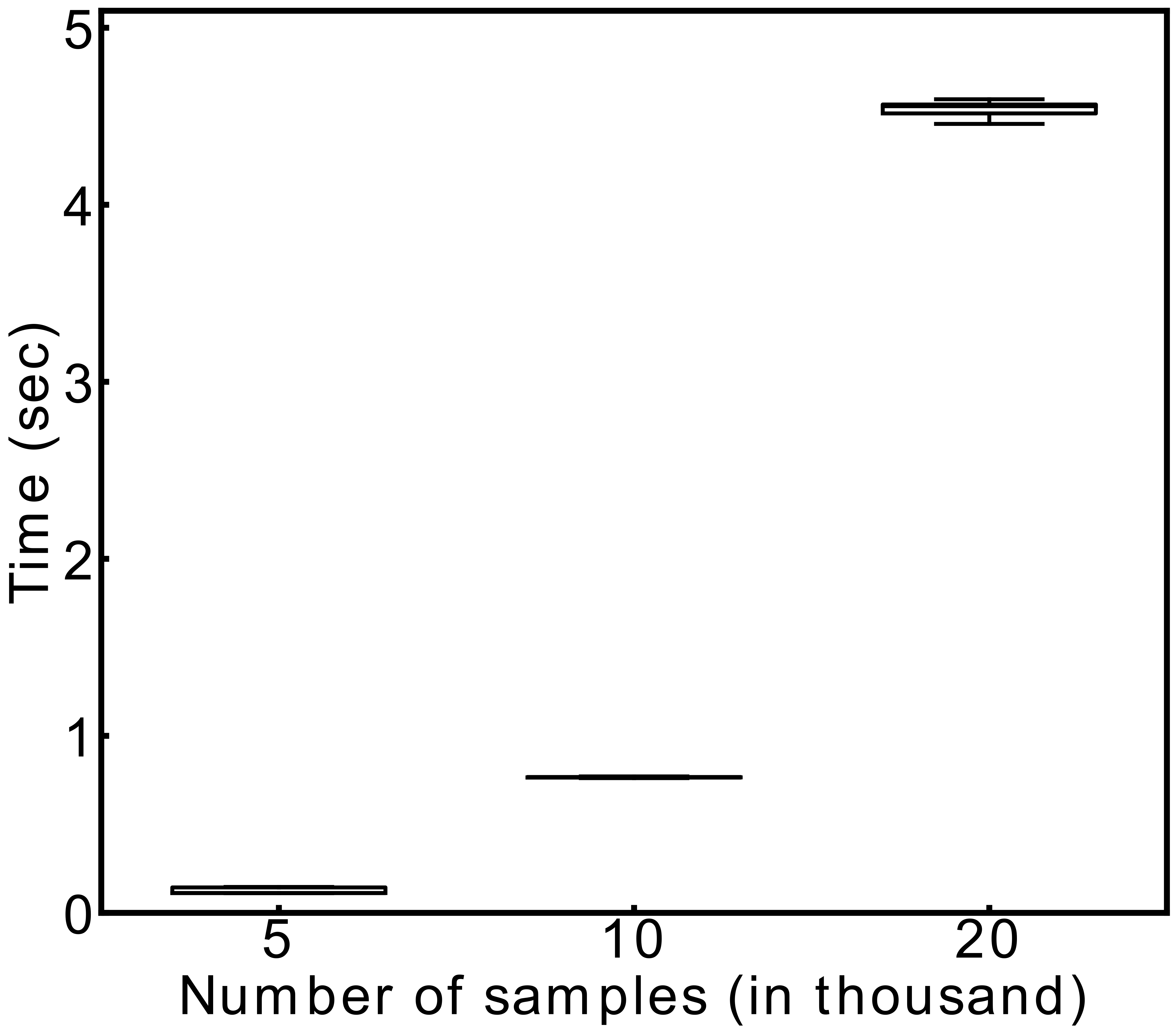}}}
    \caption{The execution time of (a) Alice, (b) Bob and (c) the server are given. We also demonstrate (d) the time required for the prediction of the test samples, which are $20\%$ of the total number of samples in each case.}
    \label{fig:exe_time}
\end{figure*}

\section{Security Analysis}
A semi-honest adversary who corrupts any of the input-parties cannot learn anything about the private inputs of the other input-party. During the protocol execution, two vectors of random values and a single random value are sent from Alice to Bob. The views of the input-parties consist only of vectors with random values. Using these random values, it is not possible for one party to infer something about the other party's private inputs \cite{unal2019framework}.

\begin{theorem}
A malicious adversary $\mathcal{A}$ corrupting the function-party learns nothing more than the result of gram matrix. It is computationally infeasible for $\mathcal{A}$ to infer any information about the input-parties’ data $X$ and $Y$ as long as Perfect RE multiplication is semantically secure (Definition \ref{def:remul}). 
\end{theorem}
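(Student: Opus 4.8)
The plan is to prove this in the standard simulation paradigm: I would exhibit a probabilistic polynomial-time simulator $\mathcal{S}$ that, given only the gram matrix $K$, outputs a transcript computationally indistinguishable from the real view of the function-party. This single statement subsumes both clauses of the theorem, since any efficient procedure that extracted information about $X$ or $Y$ not already determined by $K$ would immediately distinguish the real view from $\mathcal{S}(K)$.

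First I would reduce the malicious case to the semi-honest one. In the protocol the function-party is \emph{receive-only}: it obtains $(C^1,C^3)$ from Alice and $(C^2,C^4)$ from Bob and never sends a message that influences what the input-parties compute. A malicious $\mathcal{A}$ at the function-party can therefore only abort, reorder its local computation, or post-process its view, none of which alters the incoming transcript; hence its view is distributed exactly as a semi-honest function-party's, and it suffices to build $\mathcal{S}$ with $\mathcal{S}(K)\approx_c\big(\{C^1_{.i}\}_i,\{C^3_i\}_i,\{C^2_{.j}\}_j,\{C^4_j\}_j\big)$.

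Next I would construct $\mathcal{S}$. The diagonal blocks $X^\top X$ and $Y^\top Y$ of $K$ reach the function-party verbatim, so $\mathcal{S}$ simply copies them from its input. For the cross block, recall that decoding computes $k_{ij}=\sum_d(C^1_{.i}\odot C^2_{.j})_d-C^3_i-C^4_j$, i.e.\ each entry is recovered exactly as in the recovery procedure of Definition~\ref{def:remul} after summing the $n_f$ coordinate-wise products. Accordingly $\mathcal{S}$ lifts the perfect multiplication simulator $\mathsf{Sim}(y;a_1,a_2,a_3)=(a_1,a_2,a_3,a_1a_2-y-a_3)$ to the dot product: it samples the masked parts $C^1_{.i}$, $C^2_{.j}$ and the scalars $C^3_i$ uniformly over $\mathsf{R}$ (matching the distributions of $X_{.i}+r_1$, $Y_{.j}+r_2$ and $r_2\!\cdot\!X_{.i}+r_3$ under uniform $r_1,r_2,r_3$) and then sets each $C^4_j$ to the unique value making the recovery formula output the prescribed $k_{ij}$, just as $\mathsf{Sim}$ fixes its fourth component from the output. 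Indistinguishability would then follow from a hybrid argument that replaces, one instance at a time, a genuine randomized encoding by the output of $\mathsf{Sim}$ --- using decomposability and affinity of the DARE for addition and multiplication \cite{applebaum2017garbled} to collapse the $n_f$ coordinate encodings of a given pair $(i,j)$ into one encoding with output $k_{ij}$, then iterating over all $n_a n_b$ pairs --- with total distinguishing advantage bounded by $n_a n_b$ times that of the underlying RE and hence negligible whenever Perfect RE for multiplication is semantically secure.

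The hard part will be the hybrid argument, not the simulator itself. As the protocol is written, the masks $r_1,r_2,r_3$ are reused across all $n_f$ coordinates and across all sample pairs, so the individual encodings are not independent and one cannot invoke the single-shot perfectness of Definition~\ref{def:remul} in a black-box, per-instance way. I would handle this either (i) by reformulating the protocol with fresh randomness per instance, in which case perfectness yields an \emph{exact} simulation and the theorem holds unconditionally, or (ii) by carrying the shared randomness explicitly through the hybrids and showing that the correlations it induces among the components --- for example the fact that $C^1_{.i}-C^1_{.i'}$ is fixed regardless of the masks --- are themselves reconstructible from $K$. Pushing route (ii) through, and thereby pinning down exactly what ``learns nothing more than the gram matrix'' means in the presence of reused randomness, is the main technical obstacle and is precisely where the semantic-security hypothesis in the statement is invoked.
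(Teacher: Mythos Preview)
Your plan is sound and, in fact, more careful than the paper's own argument. The paper proceeds differently: for correctness it writes out the DARE of the dot product for $n_f=2$ and then invokes the concatenation lemma of \cite{applebaum2017garbled} to cast the dot-product encoding as a concatenation of $n_f$ perfect multiplication encodings, hence itself a perfect encoding of $f_d$. For security it exhibits a simulator that simply samples all four message vectors $C^{1'},C^{2'},C^{3'},C^{4'}$ uniformly (via a PRG $G'$) and argues that, since the real messages are one-time-pad masked by uniform $r_1,r_2,r_3$ drawn from a PRG $G$, the two distributions are computationally close. The paper does not separate out the malicious-to-semi-honest reduction you give, and its simulator is \emph{not} handed $K$ and does not enforce consistency with it as yours does.

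The issues you isolate as ``the hard part'' are precisely what the paper's proof elides. The concatenation lemma is applied to a single pair $(X_{.i},Y_{.j})$---and even there the paper silently rewrites the encoding with fresh $r_{13},r_{23}$ per coordinate rather than the single $r_3$ of the actual protocol---but the protocol reuses the \emph{same} $(r_1,r_2,r_3)$ across all $n_a n_b$ cross pairs, so the per-instance encodings are correlated exactly as you observe (e.g.\ $C^1_{.i}-C^1_{.i'}=X_{.i}-X_{.i'}$ is mask-free). Moreover, a simulator that outputs purely uniform $C^{1'},\ldots,C^{4'}$ without reference to $K$ is distinguishable from the real view by decoding and comparing to the prescribed gram matrix. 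Your route~(i)/(ii) dichotomy is therefore the substantive content a complete proof would have to supply; the paper's argument essentially asserts indistinguishability at the point where your hybrid analysis would begin.
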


\begin{proof}
We first show the correctness of our solution. We assume $n_f = 2$ and encode the function $f_d(x,y) = x_1y_1+x_2y_2$ over some finite ring $\mathsf{R}$ by the following DARE:

\begin{equation*}
\begin{aligned}
    \hat{f_d}(x,y;r) = ( & x_1+r_{11}, y_1+r_{12}, x_2+r_{21}, y_2+r_{22}, \\
    & r_{12}x_1+ r_{22}x_2+r_3, \\ 
    & r_{11}y_1+r_{11}r_{12} + r_{21}y_2+r_{21}r_{22}-r_3)
\end{aligned}
\end{equation*}

Given an encoding $(c_1,c_2,c_3,c_4,c_5,c_6)$, $f_d(x,y)$ is recovered by computing $c_1c_2+c_2c_4+c_5+c_6$.

By the concatenation lemma in \cite{applebaum2017garbled}, we can divide $c_5$ and $c_6$ into $n_f$ shares by using $n_f$ random values instead of a single $r_3$ value.

\begin{equation*}
\begin{aligned}
    \hat{f_d}(x,y;r) = ( & x_1+r_{11}, y_1+r_{12}, r_{12}x_1 + r_{13}, r_{11}y_1+r_{11}r_{12} - r_{13}, \\
    & x_2+r_{21}, y_2+r_{22}, r_{22}x_2+r_{23}, r_{21}y_2+r_{21}r_{22}-r_{23})
\end{aligned}
\end{equation*}

Given an encoding $(c_1,c_2,c_3,c_4,c_5,c_6,c_7,c_8)$,

\begin{equation*}
\begin{aligned}
    \hat{f_m}(x_1,y_1;r) = (c_1,c_2,c_3,c_4)\\
    \hat{f_m}(x_2,y_2;r) = (c_5,c_6,c_7,c_8)
\end{aligned}
\end{equation*}

By the concatenation lemma in \cite{applebaum2017garbled}, $\hat{f_d}(x,y;r) = $ $(\hat{f_m}(x_1,y_1;r), \hat{f_m}(x_2,y_2;r))$ perfectly encodes the function $f_d(x,y)$ if Perfect RE multiplication is semantically secure. 

After showing the correctness, we analyze the security with the simulation paradigm. In the simulation paradigm, there is a simulator who generates the view of a party in the execution. A party’s input and output must be given to the simulator to generate its view. Thus, security is formalized by saying that a party’s view can be simulatable given its input and output and the parties learn nothing more than what they can derive from their input and prescribed output.

The function-party $\mathcal{F}$ does not have any input and output. A simulator $\mathcal{S}$ can generate the views of incoming messages received by $\mathcal{F}$. $\mathcal{S}$ creates four vectors $C^{1'}$,$C^{2'}$,$C^{3'}$,$C^{4'}$ with uniformly distributed random values using a pseudorandom number generator $G'$. Finally, $\mathcal{S}$ outputs $\{C^{1'}$,$C^{2'}$,$C^{3'}$,$C^{4'}\}$. 

In the execution of the protocol $\pi$, $\mathcal{A}$ receives four messages which are masked with uniformly random values generated using a pseudorandom number generator $G$. The view of $\mathcal{A}$ includes $\{C^1$,$C^2$,$C^3$,$C^4\}$. The distribution over $G$ is statistically close to the distribution over $G'$. This implies that

\[\{\mathcal{S}(C^{1'},C^{2'},C^{3'},C^{4'})\} \overset{c}{\equiv} \{view^\pi_{\mathcal{A}}(C^1,C^2,C^3,C^4)\}\]

\end{proof}

\section{Results}
To demonstrate the performance, we conduct experiments on a PC equipped with Intel Core i7-7500U with 2.70 GHz processor and 16 GB memory RAM. We employ varying sizes of eye landmark data, that are $5,000$, $10,000$ and $20,000$ samples of which one-fifth is the test data and we split the data between the input-parties equally. The framework allows us to optimize the parameters of the model in the server without further communicating with the input-parties. Thanks to this, we utilize $5$-fold cross-validation to optimize the parameters, which are the similarity adjustment parameter $\gamma \in \{2^{-3},2^{-2},\cdots,2^{4}\}$ of the Gaussian RBF kernel, the misclassification penalty parameter $C \in \{2^{-3},2^{-2},\cdots,2^{3}\}$, and the tolerance parameter $\epsilon \in \{0.005, 0.01, 0.05, 0.1, 0.5, 1\}$ of SVR. After parameter optimization, we repeat the experiment on varying sizes of eye landmark data with the optimal parameter set $10$ times to assess the execution time. To evaluate the gaze estimation results, we employ mean angular error in the same way as in \cite{Park2018}. Table \ref{tab:mae} demonstrates the relationship between the dataset size and the resulting mean angular error. Since no additional noise is introduced during the computation of the kernel matrix, the results from our privacy-preserving framework are the same with the non-private ones. The mean angular errors are lower compared to the state-of-the-art gaze estimation techniques since we use synthetic data and fixed camera position during image rendering.

\begin{table}[h]
    \caption{The mean angular errors for varying dataset sizes.}
    \label{tab:mae}
    \centering
    \begin{tabular}{ c c }
     \toprule
     \makecell{\# of samples} & \makecell{Mean angular error} \\
     \midrule
     5k & 0.21 \\
     10k & 0.18 \\
     20k & 0.17 \\
     \bottomrule
    \end{tabular}
\end{table}

The amount of time to train and test the models increases as the sample sizes increase since computation requirements get larger. The increment in the dataset size increases the communication cost among parties. The execution times of all parties for $10$ runs with the optimal parameters are shown in Figure \ref{fig:exe_time}. We also demonstrate the amount of time to predict the test samples, which corresponds to one-fifth of the total number of samples to emphasize the real-time working capabilities. In the experiment with $20,000$ samples, for instance, we spend $\approx4.5$ seconds to predict $4,000$ test samples, which corresponds to $1.125$ ms per sample. When the current sampling frequencies of eye trackers are taken into consideration, it is possible to deploy and use the framework to estimate gaze if an optimized communication between the parties is established.

\section{Conclusion}
In this work, we utilized a framework based on randomized encoding to estimate human gaze in a privacy-preserving way and in real-time. Our solution can provide improved gaze estimation if input-parties want to use each other's data for different reasons such as to account for genetic structural differences in the eye region. None of the input-parties has the access to the eye landmark data of the others or the result of the computation in the function party, while the function-party cannot infer anything about the data of the input-parties. Temporal information of the visual scanpath, pupillary, or blinks cannot be reconstructed due to the shuffling of the data, and lack of sensory information and direct access to the eye landmarks. Our solution works in real-time, hence it could be deployed along with HMDs for different use-cases and extended to similar eye tracking related problems if similar amount of features is used. To the best of our knowledge, this is the first work based on function-specific privacy models in the eye tracking domain. The number of parties is a limitation of our solution. Thus, as future work we will extend our work to a larger number of parties.

\bibliographystyle{ACM-Reference-Format}
\bibliography{sample-base}

\end{document}